\documentclass{cstr}

%-----------------------------------------
%   If you use another package or new
%   commend, please add here.
%-----------------------------------------
\usepackage{amsmath,amsfonts,amssymb,latexsym,bm}
\usepackage{amsthm}
\usepackage{mathrsfs}
\usepackage[pdftex]{graphicx}
\usepackage{subfig}
\usepackage{url}
\usepackage{cite}

\usepackage{booktabs, multirow}

\newtheorem{definition}{Definition}

\newtheorem{proposition}{Proposition}

\usepackage{algorithm, algorithmic}

%-----------------------------------------
%   Title
%-----------------------------------------
\title{
  Non-readily identifiable data collaboration analysis for multiple datasets including personal information
}

%-----------------------------------------
%   Author
%-----------------------------------------
\author[1,*]{Akira Imakura}
\author[1]{Tetsuya Sakurai}
\author[1]{Yukihiko Okada}
\author[2]{Tomoya Fujii}
\author[2]{Teppei Sakamoto}
\author[2]{Hiroyuki Abe}

%-----------------------------------------
%   Affiliation
%-----------------------------------------
\affil[1]{University of Tsukuba, 1-1-1 Tennodai, Ibaraki, Tsukuba 305-8573, Japan}
\affil[2]{NTT DATA Corporation, Toyosu Center Building, 3-3, Toyosu 3-chome, Koto-ku, Tokyo 135-6033, Japan}

%-----------------------------------------
%   Corresponding author's email address
%-----------------------------------------
\email{imakura@cs.tsukuba.ac.jp}

\begin{document}
\maketitle

\begin{abstract}
Multi-source data fusion, in which multiple data sources are jointly analyzed to obtain improved information, has considerable research attention.
For the datasets of multiple medical institutions, data confidentiality and cross-institutional communication are critical.
In such cases, data collaboration (DC) analysis by sharing dimensionality-reduced intermediate representations without iterative cross-institutional communications may be appropriate.
{\it Identifiability} of the shared data is essential when analyzing data including personal information.
In this study, the identifiability of the DC analysis is investigated.
The results reveals that the shared intermediate representations are readily identifiable to the original data for supervised learning.
This study then proposes a non-readily identifiable DC analysis only sharing non-readily identifiable data for multiple medical datasets including personal information.
The proposed method solves identifiability concerns based on a random sample permutation, the concept of interpretable DC analysis, and usage of functions that cannot be reconstructed.
In numerical experiments on medical datasets, the proposed method exhibits a non-readily identifiability while maintaining a high recognition performance of the conventional DC analysis.
For a hospital dataset, the proposed method exhibits a nine percentage point improvement regarding the recognition performance over the local analysis that uses only local dataset.
\end{abstract}

\section{Introduction}
\subsection{Background}
Multi-source data fusion, in which multiple data sources are jointly analyzed to obtain improved information or refined data with lower cost, higher quality, and more relevant information \footnote{This is a definition introduced in \cite{zhang2021tensor}. Because ``data fusion'' is a broad concept, diverse definitions exist.}, have attracted considerable research attention \cite{acar2014structure,wang2019data,zhang2021tensor,wan2022uav,zhang2022data}.
For example, in medical data analysis for rare diseases, it was reported that when the analysis is conducted using only data from a single institution, the accuracy is insufficient because of the small sample size \cite{mascalzoni2014rare}.
On the other hand, in some real-world applications, such as medical, financial, and manufacturing data analyses, sharing the original data for analysis is difficult because of data confidentiality.
Furthermore, privacy-preserving analysis methods, in which datasets are collaboratively analyzed without sharing the original data, are essential.
\par
Federated learning \cite{li2019survey,konevcny2016federated,mcmahan2016communication,yang2019federated,criado2022non-iid} is a typical technology for this topic.
In multi-source data fusion on multiple medial institutions, cross-institutional communication is a major concern for social implementation.
In such cases, instead of the federated learnings based on iterative model updating with cross-institutional communications, data collaboration (DC) analysis \cite{imakura2020data,imakura2021collaborative} by sharing intermediate representations constructed by some dimensionality reduction method such as principal component analysis (PCA) \cite{pearson1901liii} without iterative cross-institutional communications is preferred.
\par
{\it Identifiability} of the shared data is essential for analyzing data including personal information.
In the general data protection regulation (GDPR) of EU, California consumer privacy act (CCPA) of USA, and amended act on the protection of personal information (APPI) of Japan, data that can indirectly identify individuals are defined as personal information.
\par
In this study, a mathematical definition of the identifiability of the data is introduced; see Definition~1 in Section~3.
The shared data should be non-readily identifiable in privacy-preserving machine learning.
\subsection{Motivation and contributions}
\begin{figure*}[!t]
\centering
\includegraphics[scale=0.35, bb = 0 0 1134 567]{./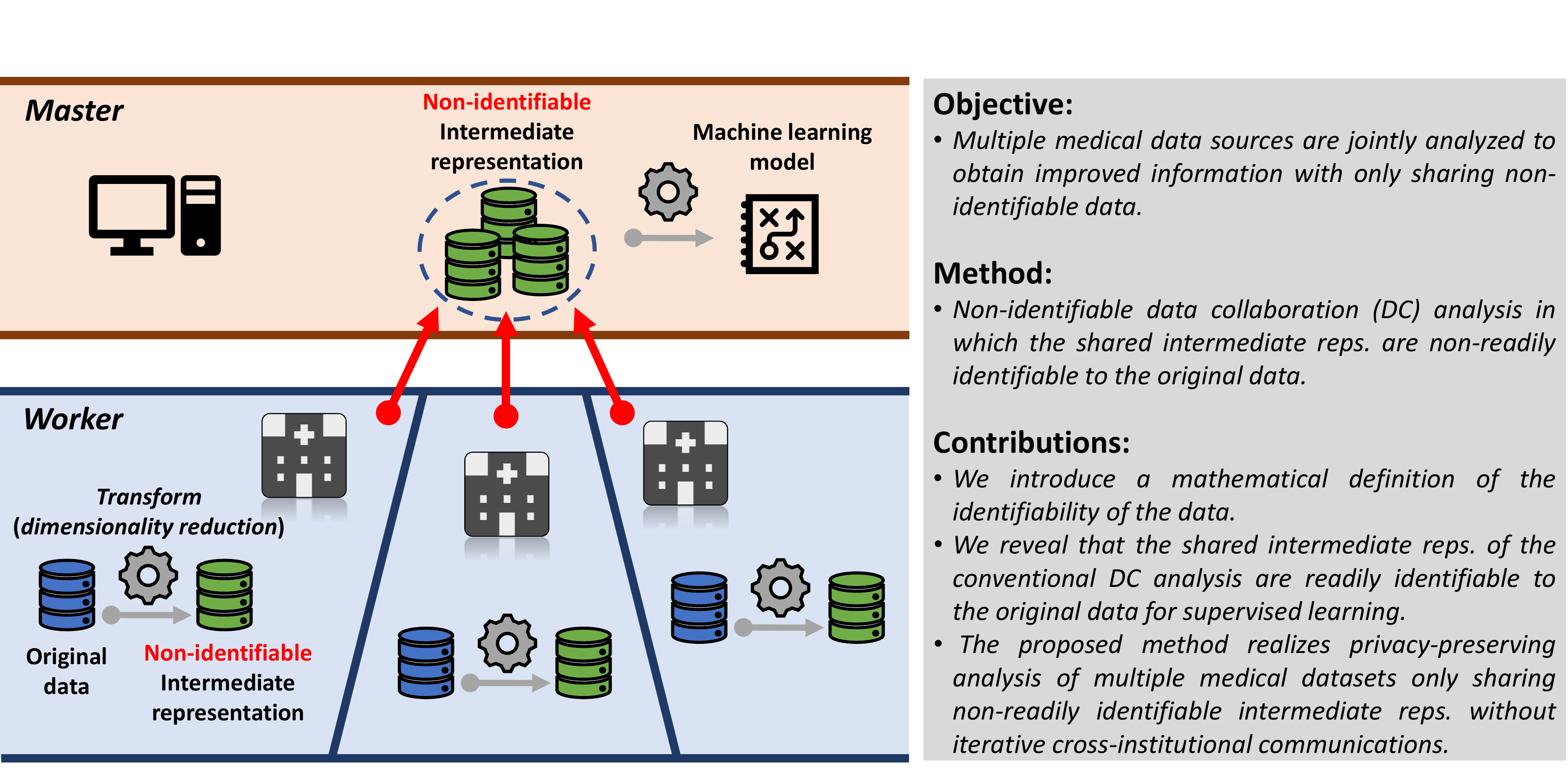}
\caption{
  Outline of the proposed method.
  The proposed method realizes privacy-preserving analysis of multiple medical datasets sharing only non-readily identifiable intermediate representations without iterative cross-institutional communications.
}
\label{fig:outline}
\end{figure*}
In this study, we focus on supervised machine learning for classification problems of medical data held by multiple medial institutions and investigate the identifiability of the DC analysis.
We analyze the identifiability of the shared intermediate representations and reveal that the shared intermediate representations are readily identifiable to the original data for supervised learning even with a random sample permutation.
Next, we propose a non-readily identifiable DC analysis only sharing non-readily identifiable data based on strategies: a random sample permutation, the concept of interpretable DC analysis \cite{imakura2021interpretable}, and usage of functions that cannot be reconstructed (Figure~\ref{fig:outline}).
\par
The main contributions are as follows:
\begin{itemize}
  \item This study introduces a mathematical definition of the identifiability of the data (Definition~1 in Section~3), which is based on whether or not someone hold a key to collate non-personal information with personal information.
  \item This study reveals that the shared intermediate representations of the conventional DC analysis are readily identifiable to the original data for supervised learning even with a random sample permutation (Section 4.2).
  \item This study proposes a non-readily identifiable DC analysis that realizes privacy-preserving analysis of multiple medical datasets sharing only non-readily identifiable intermediate representations without iterative cross-institutional communications (Section~5).
\end{itemize}
In numerical experiments in Section~6 on medical datasets, the proposed non-readily identifiable DC analysis exhibits a non-readily identifiability while maintaining a high recognition performance of the conventional DC analysis for medical datasets (Section~6).
The numerical experiments also demonstrates that the recognition performance of the proposed method is comparable to the centralized analysis that shares the raw datasets.
\section{Related works}
\subsection{Federated Learning}
Federated learning systems have been developed for privacy-preserving analysis on multiple datasets.
The concept of federated learning was first proposed by Google \cite{konevcny2016federated} typically for Android phone model updates \cite{mcmahan2016communication}.
Federated learning is primarily based on (deep) neural network and updates the model iteratively \cite{li2019survey,konevcny2016federated,mcmahan2016communication,yang2019federated}.
\par
To update the model, federated stochastic gradient descent (FedSGD) and federated averaging (FedAvg) are typical strategies \cite{mcmahan2016communication}. 
%Federated learning, including FedSGD and FedAvg, requires cross-institutional communication in each iteration.
Federated learning including more recent methods, such as FedProx \cite{li2020federated} and FedCodl \cite{ni2022federated}, requires cross-institutional communication in each iteration.
This requirement is a major concern for social implementation specifically for data owned by multiple medical institutions.
For more details, we refer to \cite{li2019survey,yang2019federated} and references therein.
\subsection{DC analysis}
As an approach for privacy-preserving analysis on multiple datasets, non-model share-type federated learning called {\it DC analysis} has been proposed for supervised learning \cite{imakura2020data,imakura2021collaborative,imakura2021interpretable,mizoguchi2022application}, novelty detection \cite{imakura2021collaborative2}, and feature selection \cite{ye2019distributed}.
In DC analysis, the dimensionality-reduced {\it intermediate representations} are centralized instead of sharing the model.
The centralized intermediate representations are transformed to incorporable forms called {\it collaboration representations}.
For constructing the incorporable collaboration representations, all the parties generate a shareable {\it anchor dataset} and centralize its intermediate representation.
Next, the collaborative representation is analyzed as a single dataset.
\par
DC analysis preserves the privacy of the original data by allowing each party to use individual functions to generate the intermediate representation and not sharing them \cite{imakura2021accuracy}.
The DC analysis does not require iterative communications between parties.
\subsection{Homomorphic encryption computation}
Cryptographic computation is one of the most well-known methods used for ensuring privacy preservation \cite{jha2005privacy,cho2018secure,gilad2016cryptonets}.
Cryptographic methods can compute a function over distributed data while retaining the privacy of the data.
Any given function can be computed by applying fully homomorphic encryption \cite{gentry2009fully}.
However, this method is not feasible for large datasets because of the large computational cost even with the latest implementations \cite{chillotti2016faster,zalonis2022report}.
\section{Identifiability of the data}
\subsection{Importance of identifiability}
When analyzing data including personal information, various privacy and data protection laws, professional duties, and custodial obligations are applied.
If the analysis is outsourced to another organization, the analysis organization is subject to the same obligations.
In addition to directly personally identifiable data, similar obligations could be imposed on data that can be linked with personal information and thereby indirectly identify individuals.
\par
For example, in laws regarding personal information in countries such as the general data protection regulation (GDPR) of EU, California consumer privacy act (CCPA) of USA, and amended act on the protection of personal information (APPI) of Japan, data that indirectly identifies individuals are defined as personal information as follows:
\begin{itemize}
  \item {\bf Article 4(1) of GDPR}: \\
    {\it ``Personal data'' means any information relating to an identified or identifiable natural person (`data subject'); an identifiable natural person is one who can be identified, directly or indirectly.}
  \item {\bf 1798.140(o) of CCPA}: \\ 
    {\it ``Personal information'' means information that identifies, relates to, describes, is capable of being associated with, or could reasonably be linked, directly or indirectly, with a particular consumer or household.}
  \item {\bf Article 2 of Amended APPI}: \\
    {\it ``Personal information'' in this Act means that information relating to a living individual \dots (including those which can be readily collated with other information and thereby identify a specific individual).}
\end{itemize}
\par
Therefore, the identifiability of the shared information is essential for privacy-preserving analysis.
Because readily identifiable data are treated in the same manner as personal information data, the shared data should be non-readily identifiable to the original data for privacy-preserving analysis on multiple datasets including personal information.
\subsection{Definition and property of identifiability}
In this study, we define {\it identifiability} of the data as the data readily collated with the original data including personal information.
Here, we introduce the following mathematical definition of readily identifiable data.
\begin{definition}
  Let $x_i^{\rm p}$ and $x_i^{\rm np}$ be a pair of the data including and not including personal information that can directly identify a specific individual for the $i$-th person, respectively.
  We let $\mathcal{X}^{\rm p} = \{x_1^{\rm p}, x_2^{\rm p}, \dots, x_n^{\rm p}\}$ and $\mathcal{X}^{\rm np} = \{x_1^{\rm np}, x_2^{\rm np}, \dots, x_n^{\rm np}\}$ be personal information and non-personal information datasets for the same $n$ persons, respectively.
  \par
  For non-personal information $x^{\rm np} \in \mathcal{X}^{\rm np}$, if and only if someone else holds a key to correctly collate the corresponding personal information $x^{\rm p} \in \mathcal{X}^{\rm p}$ or can generate the key by their own, then the non-personal dataset $\mathcal{X}^{\rm np}$ is defined as ``readily identifiable'' to personal dataset $\mathcal{X}^{\rm p}$.
\end{definition}
Examples of the key to collate the data are unique common IDs and features.
In addition, excellent approximations of the features can be a key to correctly collate the data.
Here, we have the following property.
\begin{proposition}
  If either of the followings:
  \begin{itemize}
    \item The data holder of $\mathcal{X}^{\rm p}$ holds the function $v$ such that $x_i^{\rm np}=v(x_i^{\rm p})$ or can generate the function by their own,
    \item The data holder of $\mathcal{X}^{\rm np}$ holds the function $w$ such that $x_i^{\rm p}=w(x_i^{\rm np})$ or can generate the function by their own,
  \end{itemize}
  then $\mathcal{X}^{\rm np}$ is readily identifiable to $\mathcal{X}^{\rm p}$.
\end{proposition}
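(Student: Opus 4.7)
The plan is to handle the two bullet points separately and, in each case, exhibit an explicit procedure that produces the correspondence between individual records of $\mathcal{X}^{\rm np}$ and $\mathcal{X}^{\rm p}$; this procedure will then play the role of the key demanded by Definition~1. Because Definition~1 allows the key to be either held directly or \emph{generated} by the interested party, any constructive one-to-one correspondence derivable from information already in hand suffices.

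First I would treat the case where the holder of $\mathcal{X}^{\rm p}$ possesses (or can construct) a function $v$ with $x_i^{\rm np}=v(x_i^{\rm p})$. The argument is to apply $v$ record-by-record to every $x_i^{\rm p}\in\mathcal{X}^{\rm p}$, obtain the tuple $(v(x_1^{\rm p}),\dots,v(x_n^{\rm p}))$, and then, for each index $i$, locate the unique $j$ with $v(x_i^{\rm p})=x_j^{\rm np}$ in $\mathcal{X}^{\rm np}$. The resulting permutation $i\mapsto j$ is precisely a key that correctly collates each non-personal record to the personal record of the same individual, so by Definition~1 the dataset $\mathcal{X}^{\rm np}$ is readily identifiable to $\mathcal{X}^{\rm p}$. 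The symmetric case, in which the holder of $\mathcal{X}^{\rm np}$ has $w$ with $x_i^{\rm p}=w(x_i^{\rm np})$, is handled identically by swapping the roles of the two datasets and applying $w$ to every element of $\mathcal{X}^{\rm np}$.

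The main obstacle I expect is not analytical but definitional: one must justify that the look-up step ``find the unique $j$ with $v(x_i^{\rm p})=x_j^{\rm np}$'' is well posed. This requires that the pairing $x_i^{\rm p}\leftrightarrow x_i^{\rm np}$ induced by Definition~1 is one-to-one, which is already built into the setup since Definition~1 indexes both datasets over the same $n$ individuals. If one wishes to be careful about ties (i.e.\ distinct individuals producing the same $x^{\rm np}$), a brief remark suffices: such ties would only enlarge the candidate set and still identify the individual up to a small equivalence class, which the personal-information statutes quoted in Section~3.1 still treat as identifiable. With this subtlety addressed, both cases reduce to straightforward application of the supplied function, and the proposition follows directly from Definition~1.
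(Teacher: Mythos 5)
Your proposal is correct and follows essentially the same route as the paper: in each case the party holding (or able to generate) the function applies it record-by-record to form the pairs $(x_i^{\rm p},x_i^{\rm np})$, and these computed values serve as the key that collates each non-personal record with its personal counterpart, exactly as Definition~1 requires. The extra remark about possible ties is a refinement the paper does not bother with, but it does not change the argument.
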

\begin{proof}
  In case, the data holder of $\mathcal{X}^{\rm p}$ holds the function $v$ such that $x_i^{\rm np}=v(x_i^{\rm p})$ or can generate the function by their own, they can obtain pairs of $(x_i^{\rm p},x_i^{\rm np})$ corresponding to any $x_i^{\rm p} \in \mathcal{X}^{\rm p}$ using the function $v$.
  Therefore, using $x_i^{\rm np}$ as a key, for non-personal information $x^{\rm np} \in \mathcal{X}^{\rm np}$, the corresponding personal information $x^{\rm p} \in \mathcal{X}^{\rm p}$ can be collated accurately.
  \par
  In the same manner, for the case that the data holder of $\mathcal{X}^{\rm np}$ holds the function $w$ such that $x_i^{\rm p}=w(x_i^{\rm np})$ or can generate the function by their own, they can obtain pairs of $(x_i^{\rm p},x_i^{\rm np})$ corresponding to any $x_i^{\rm np} \in \mathcal{X}^{\rm np}$ using the function $w$.
  Therefore, using $x_i^{\rm p}$ as a key, for non-personal information $x^{\rm np} \in \mathcal{X}^{\rm np}$, the corresponding personal information $x^{\rm p} \in \mathcal{X}^{\rm p}$ can be collated accurately.
\end{proof}
Proposition~1 indicates that encrypted datasets shared in homomorphic encryption computation are readily identifiable to the original data because someone holds encryption and decryption functions.
Retention of excellent approximations of functions $v$ and $w$ can result in readily identifiable.
\section{Identifiability of DC analysis}
In this study, we focus on supervised machine learning for classification problems, which aims to construct a prediction or classification model by labeled training datasets \cite{bishop2006pattern}, of medical data of multiple medial institutions.
\par
Let $m$ and $n$ denote the numbers of features (dimensionality of each data) and training data samples.
Let $X = [{\bm x}_{1}, {\bm x}_{2}, \dots, {\bm x}_{n}]^{\rm T} \in \mathbb{R}^{n \times m}$ and $Y = [{\bm y}_1, {\bm y}_2, \dots, {\bm y}_n]^{\rm T} \in \mathbb{R}^{n \times \ell}$ be the training dataset and the corresponding ground truth or label.
In this study, for privacy-preserving analysis on multiple parties, we consider horizontal data partitioning, that is, data samples are partitioned into $c$ parties as follows:
\begin{equation}
  X = \left[
    \begin{array}{c}
      X_{1} \\
      X_{2} \\
      \vdots  \\
      X_{c} 
    \end{array}
  \right], \quad
  Y = \left[
    \begin{array}{c}
      Y_{1} \\
      Y_{2} \\
      \vdots \\
      Y_{c} 
    \end{array}
  \right].
  \label{eq:data}
\end{equation}
Then, the $i$-th party has a partial dataset and the corresponding ground truth,
\begin{equation*}
  X_{i} \in \mathbb{R}^{n_i \times m}, \quad Y_i \in \mathbb{R}^{n_i \times \ell},
\end{equation*}
where $n = \sum_{i=1}^c n_i$.
\par
Here, we introduce the algorithm of the DC analysis for supervised learning of horizontal partitioned data \eqref{eq:data} and analyze its identifiability.
DC analysis is applicable to the datasets with partially common features \cite{mizoguchi2022application} and horizontal and vertical partitioned data \cite{imakura2021collaborative}.
\subsection{DC analysis}
In the practical operation strategy, the DC analysis is operated by two roles, namely {\it worker} and {\it master}.
The workers have the private dataset $X_{i}$ and the corresponding ground truth $Y_i$ and want to analyze them without sharing $X_{i}$.
The master supports to collaborative analysis.
\par
First, all workers generate the same anchor data $X^{\rm anc} \in \mathbb{R}^{r \times m}$, which is shareable data consisting of public data or dummy data that are randomly constructed.
A random matrix works well in general \cite{imakura2020data,imakura2021collaborative,imakura2021collaborative2}.
Then, each worker constructs intermediate representations,
\begin{align*}
  \widetilde{X}_{i} = f_{i}(X_{i}) \in \mathbb{R}^{n_i \times \widetilde{m}_{i}}, \quad
  \widetilde{X}_{i}^{\rm anc} = f_{i}(X^{\rm anc}) \in \mathbb{R}^{r \times \widetilde{m}_{i}},
\end{align*}
with a linear or nonlinear row-wise mapping function $f_{i}$ such as dimensionality reduction, with $\widetilde{m}_{i} < m$, including unsupervised methods \cite{pearson1901liii,he2004locality,maaten2008visualizing} and supervised methods \cite{fisher1936use,sugiyama2007dimensionality,li2017locality,imakura2019complex}.
For privacy and confidentiality concerns, the function $f_{i}$ should be set such that the original data $X_i$ and its intermediate representation $\widetilde{X}_i$ do not have (approximately) the same features.
Then, the intermediate representations are send to the master.
\par
At the master side, mapping function $g_i$ for the collaboration representation is constructed satisfying $g_i(\widetilde{X}_i^{\rm anc}) \approx g_{i'}(\widetilde{X}_{i'}^{\rm anc})$ $(i, i' = 1, 2, \dots, c)$ in some sense.
In practice, $g_i$ is set as a linear function $g_i(\widetilde{X}_i^{\rm anc}) = \widetilde{X}_j^{\rm anc} G_j$ with $G_i \in \mathbb{R}^{\widetilde{m}_i \times \widehat{m}}$ and is constructed using the following minimal perturbation problem:
\begin{equation*}
  \min_{E_i, G_i' (i = 1, 2, \dots, c), \|Z\|_{\rm F} = 1} \sum_{i=1}^c \| E_i \|_{\rm F}^2 \quad \mbox{s.t. } (\widetilde{X}_{i}^{\rm anc} + E_i) G_i' = Z.
\end{equation*}
where $\|\cdot\|_{\rm F}$ denotes the Frobenius-norm of a matrix.
This can be solved by a singular value decomposition (SVD)-based algorithm for total least squares problems.
Let 
\begin{equation}
  [\widetilde{X}^{\rm anc}_1, \widetilde{X}^{\rm anc}_2, \dots, \widetilde{X}^{\rm anc}_c] \approx  U_{\widehat{m}} \Sigma_{\widehat{m}} V_{\widehat{m}}^{\rm T}
  \label{eq:SVD}
\end{equation}
be the rank $\widehat{m}$ approximation based on SVD.
Then, the target matrix $G_i$ is obtained as follows:
\begin{equation}
  G_i = (\widetilde{X}_i^{\rm anc})^\dagger U_{\widehat{m}} C,
  \label{eq:g}
\end{equation}
where $\dagger$ denotes the Moore--Penrose inverse and $C \in \mathbb{R}^{\widehat{m} \times \widehat{m}}$ is a nonsingular matrix, for example, $C=I$ and $C=\Sigma_{\widehat{m}}$ are used in practice.
The collaboration representations are analyzed as a single dataset, that is,
\begin{equation*}
  Y \approx h(\widehat{X}), \quad
  \widehat{X}
  = [ \widehat{\bm x}_1, \widehat{\bm x}_2, \dots, \widehat{\bm x}_n]^{\rm T}
  = \left[
    \begin{array}{c}
      \widehat{X}_1 \\
      \widehat{X}_2\\
      \vdots \\
      \widehat{X}_{c}
    \end{array}
  \right] 
  = \left[
    \begin{array}{c}
      \widetilde{X}_1 G_1 \\
      \widetilde{X}_2 G_2 \\
      \vdots \\
      \widetilde{X}_c G_c
    \end{array}
  \right] 
  \in \mathbb{R}^{n \times \widehat{m}}
\end{equation*}
with the shared ground truth $Y_i$ using some supervised machine learning or the deep learning methods for constructing the model function $h$ of the collaboration representation $\widehat{X}$.
Functions $g_i$ and $h$ are returned to the $i$-th worker.
\par
Let $X_i^{\rm test} \in \mathbb{R}^{s_i \times m}$ be a test dataset of the $i$-th party.
For the prediction phase, the prediction result $Y_i^{\rm pred}$ of $X_i^{\rm test}$ is obtained by the following equation:
\begin{equation}
Y_i^{\rm pred} = h( g_i(f_i (X_{i}^{\rm test})))
\label{eq:prediction}
\end{equation}
through the intermediate and collaboration representations.
\par
The algorithm of the DC analysis is summarized in Algorithm~\ref{alg:naiveDC}, where $g_i$ is set by \eqref{eq:g}.
The DC analysis requires only three cross-institutional communications, Steps 1, 4, and 9 in Algorithm~\ref{alg:naiveDC}.
A major advantage is observed over federated learnings.
The most time-consuming parts of Algorithm~\ref{alg:naiveDC} are computing intermediate representations in each worker (Steps 2, 3), generating $G_i$ by \eqref{eq:SVD} and \eqref{eq:g} in the master (Step 6), and analyzing the collaboration representation in the master (Step 8).
Because the costs of computing intermediate representations and analyzing the collaboration representation are almost the same as that of the centralized analysis sharing the raw datasets, the main increase in computational complexity of the DC analysis relative to the centralized analysis is for generating $G_i$ by \eqref{eq:SVD} and \eqref{eq:g}.
\par
The DC analysis has the following double privacy layer for the protection of private data $X_{i}$:
\begin{itemize}
  \item No one can possess private data $X_{i}$ because $f_{i}$ is private under the protocol;
  \item Even if $f_{i}$ is stolen, private data $X_{i}$ is still protected regarding $\varepsilon$-DR privacy \cite{nguyen2020autogan} because $f_{i}$ is a dimensionality reduction function ($\widetilde{m}_i < m$),
\end{itemize}
see \cite{imakura2021accuracy} for more details.
\begin{algorithm*}[!t]
\caption{DC analysis}
\label{alg:naiveDC}
\small
\begin{algorithmic}
  \REQUIRE $X_{i} \in \mathbb{R}^{n_i \times m}$, $Y_i \in \mathbb{R}^{n_i \times \ell}$, and $X_i^{\rm test}$ individually
  \ENSURE $Y_i^{\rm pred}$ $(i = 1, 2, \dots, c)$.
  \STATE
  \STATE
  \begin{tabular}{rcll}
    & \multicolumn{2}{c}{ {\it Worker-side} $(i = 1, 2, \dots, c)$} & \\ \cmidrule{2-3}
    1:  & \multicolumn{2}{l}{Generate $X^{\rm anc}$ and share to all workers} & \\
    2:  & \multicolumn{2}{l}{Generate $f_i$} & \\
    3:  & \multicolumn{2}{l}{Compute $\widetilde{X}_i = f_i(X_i)$ and $\widetilde{X}^{\rm anc}_i =  f_i(X^{\rm anc})$} & \\
    4:  & \multicolumn{2}{l}{Share $\widetilde{X}_i, \widetilde{X}_i^{\rm anc}$, and $Y_i$ to master} & \\
    \\
    &   & \multicolumn{2}{c}{ {\it Master-side}}  \\ \cmidrule{3-4}
    5:  & \qquad \qquad \qquad \qquad $\searrow$ & \multicolumn{2}{l}{Obtain $\widetilde{X}_i, \widetilde{X}_i^{\rm anc}$, and $Y_i$ for all $i$}  \\
    6:  & & \multicolumn{2}{l}{Compute $G_i$ from $\widetilde{X}_{i}^{\rm anc}$ for all $i$ by \eqref{eq:SVD} and \eqref{eq:g}} \\
    7:  & & \multicolumn{2}{l}{Compute $\widehat{X}_{i} = \widetilde{X}_{i} G_i$ for all $i$, and set $\widehat{X}$} \\
    8:  & & \multicolumn{2}{l}{Analyze $\widehat{X}$ to obtain $h$ such that $Y \approx h(\widehat{X})$} \\
    9: & \qquad \qquad \qquad \qquad $\swarrow$ & \multicolumn{2}{l}{Return $G_i$ and $h$ to each worker} \\
    \\
    & \multicolumn{2}{c}{ {\it Worker-side} $(i = 1, 2, \dots, c)$} & \\ \cmidrule{2-3}
    10: & \multicolumn{2}{l}{Obtain $G_i$ and $h$} \\
    11: & \multicolumn{2}{l}{Compute $Y_i^{\rm pred} = h(f_i(X_i^{\rm test})G_i)$} \\
  \end{tabular}
\end{algorithmic}
\end{algorithm*}
\subsection{Analysis on identifiability of the intermediate representations}
Based on Definition~1, we analyze the identifiability of the intermediate representation $\widetilde{X}_i$ stored in the master.
\par
In DC analysis, for privacy and confidentiality concerns, function $f_{i}$ is constructed such that the original data $X_i$ and its intermediate representation $\widetilde{X}_i$ do not have (approximately) the same features.
Therefore, no identical features exist that are key to collate $\widetilde{X}_i$ with $X_i$.
By contrast, in the naive implementation (Algorithm~\ref{alg:naiveDC}), $\widetilde{X}_i$ and $X_i$ share the row index, that is the $k$-th row of $\widetilde{X}_i$ is collated with the $k$-th row of $X_i$, because $f_i$ is a row-wise function.
This difficulty can be simply solved by using a random permutation matrix $P_i \in \mathbb{R}^{n_i \times n_i}$ such that each worker computes $\widetilde{X}'_i = P_i f_i(X_i)$ and $Y'_i = P_i Y_i$ and erase $P_i$ before sharing $\widetilde{X}'_i$ and $Y'_i$ to the master.
Here, $P_i$ does not change the machine learning model.
\par
However, $\widetilde{X}'_i$ is still readily identifiable to $X_i$ for supervised learning.
Because intermediate representation is constructed by $\widetilde{\bm x}_k = f_i({\bm x}_k)$ and function $f_i$ should be stored in the worker for prediction phase \eqref{eq:prediction}.
Therefore, based on Proposition~1, $\widetilde{X}'_i$ is readily identifiable to $X_i$.
\section{Non-readily identifiable DC analysis}
In Section~4.2, we revealed that the shared intermediate representations of the conventional DC analysis are readily identifiable to the original data for supervised learning even with a random sample permutation.
Identifiability of the shared data is essential for analyzing data including personal information.
\par
To solve this difficulty, we propose a non-readily identifiable DC analysis to realize privacy-preserving analysis of horizontal partitioned multiple medical datasets sharing only non-readily identifiable intermediate representations without iterative cross-institutional communications.
Note that the proposed non-readily identifiable DC analysis is naturally extendable to the datasets with partially common features and horizontal and vertical partitioned data in the same manner as \cite{mizoguchi2022application} and \cite{imakura2021collaborative} for the conventional DC analysis, respectively.
\subsection{Basic concept}
In this study, we propose a non-readily identifiable DC analysis in which the intermediate representations are non-readily identifiable to the original data based on the following strategies:
\begin{itemize}
  \item The proposed method uses a random permutation for the sample index in each worker.
  \item The proposed method makes functions $f_i$ erasable based on the concept of interpretable DC analysis \cite{imakura2021interpretable}.
  \item The proposed method uses functions $f_i$ that cannot be reconstructed.
\end{itemize}
\subsection{Practical algorithm}
Each worker constructs a dimensionality-reduced function $f_i'$ that cannot be reconstructed.
Simple examples of $f_i'$ are as follows:
\begin{itemize}
  \item $f'_i$ is constructed as a dimensionality reduction function constructed from the raw data with a small perturbation $X_i + E_i$, where $E_i \in \mathbb{R}^{n_i \times m}$,
  \item $f'_i$ is constructed as $f'_i(X) = f_i(X) E_i$ with a dimensionality reduction function $f_i$ constructed from the raw data $X_i$ and a random matrix $E_i \in \mathbb{R}^{\widetilde{m}_i \times \widetilde{m}_i}$,
\end{itemize}
where entries of $E_i$ are generated by the hardware random number generator or pseudo random number generator with the CPU time as the seed.
Each worker constructs a permutation matrix $P_i$ as well as $E_i$ and subsequently computes the intermediate representation as follows:
\begin{equation*}
  \widetilde{X}_i' = P_i f_i'(X_i), \quad
  \widetilde{X}_i^{\rm anc} = f_i'(X_i^{\rm anc}), \quad
  {Y}_i' = P_i Y_i.
\end{equation*}
Next, each worker erases $P_i$ and $f_i'$ and share $\widetilde{X}_i, \widetilde{X}_i^{\rm anc}$ and $Y_i'$ to the master.
Function $f_i'$ and the matrix $P_i$ cannot be reconstructed even using the same program code.
In the case for classification problems, $P_i$ cannot be reconstructed from $Y_i$ and $Y_i'$ because each row of $Y_i$ and $Y_i'$ are not unique.
\par
At the master side, shared intermediate representations are transformed to the collaboration representations $\widehat{X}_i'$ in the same manner as the conventional DC analysis written in Section~3.1.1 and analyzed as a single dataset as follows:
\begin{equation*}
  Y' \approx h(\widehat{X}').
\end{equation*}
\par
To realize prediction without $f_i'$, based on the concept of the interpretable DC analysis, a prediction result $Y_i^{\rm anc}$ of the anchor data $X^{\rm anc}$ is computed as follows:
\begin{equation*}
  Y^{\rm anc}_i = h(\widehat{X}_i^{\rm anc}), \quad
  \widehat{X}^{\rm anc}_i = \widetilde{X}_i G_i \in \mathbb{R}^{r \times \widehat{m}}.
\end{equation*}
The prediction result $Y_i^{\rm anc}$ of $X^{\rm anc}$ is returned to the $i$-th worker. 
The prediction model $t_i$ is then constructed by some supervised machine learning or the deep learning methods from $X^{\rm anc}$ and $Y_i^{\rm anc}$, that is,
\begin{equation*}
  Y_i^{\rm anc} \approx t_i(X^{\rm anc}),
\end{equation*}
in each worker. 
For the prediction phase, the prediction result $Y_i^{\rm pred}$ of $X_i^{\rm test}$ is obtained by
\begin{equation*}
  Y_i^{\rm pred} = t_i (X_{i}^{\rm test}).
\end{equation*}
\begin{figure*}[!t]
\centering
\includegraphics[scale=0.35, bb = 0 150 1134 570]{./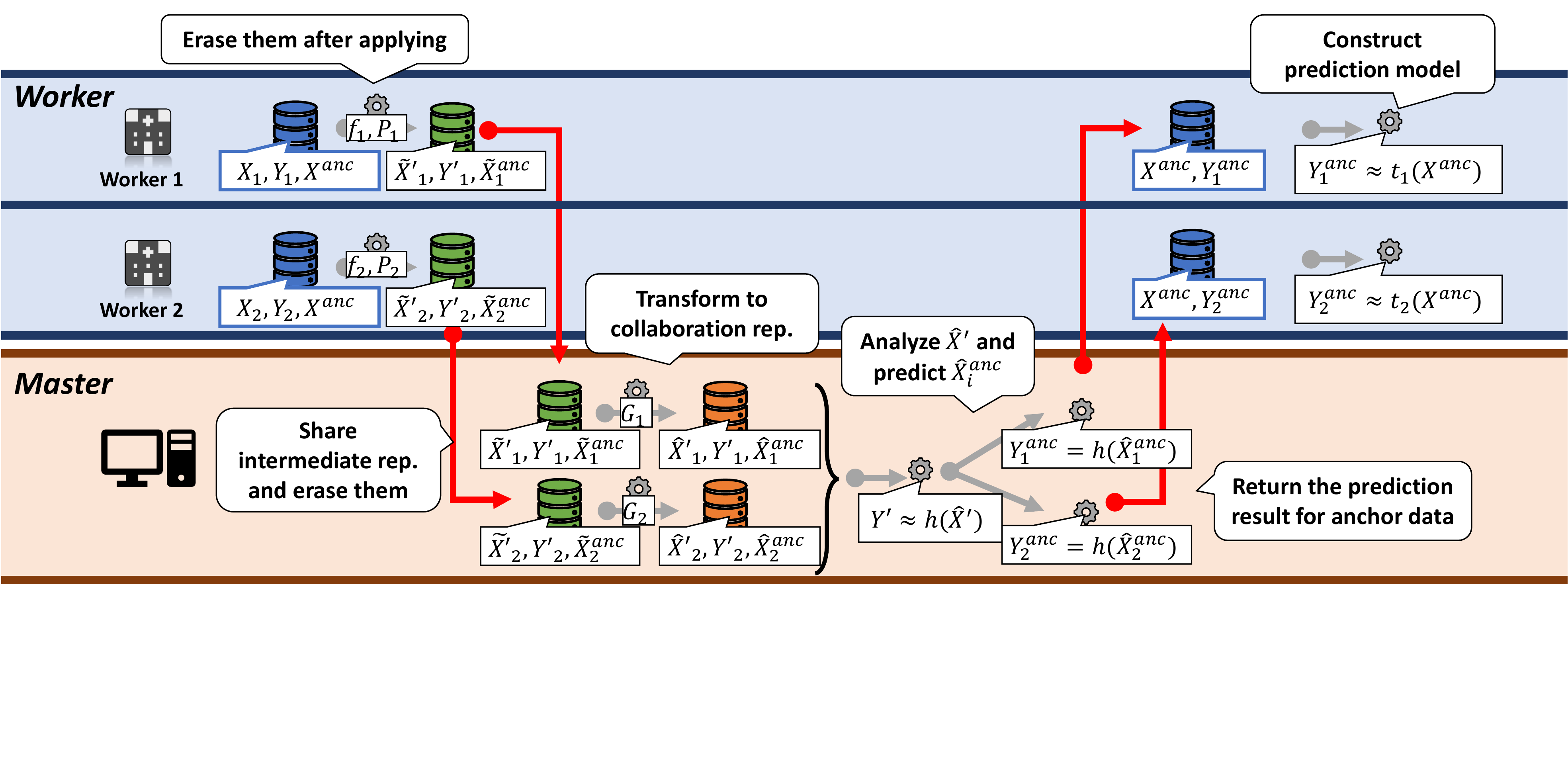}
\caption{The overview of non-readily identifiable DC analysis with $c=2$.}
\label{fig:algorithm}
\end{figure*}
\par
The algorithm of the non-readily identifiable DC analysis is summarized in Algorithm~\ref{alg:nonIDC} and Figure~\ref{fig:algorithm}.
The proposed non-readily identifiable DC analysis requires only three cross-institutional communications, namely Steps 1, 5, and 11 in Algorithm~\ref{alg:nonIDC} and has a double privacy layer for protection of private data $X_i$ as well as the conventional DC analysis described in Section~4.1.
Regarding computational complexity, the proposed non-readily identifiable DC analysis is more computationally intensive than the conventional DC analysis for analyzing anchor data $\widehat{X}$ to obtain the model $t_i$ (Step 13 in Algorithm~\ref{alg:nonIDC}).
\begin{algorithm*}[!t]
\caption{Non-readily identifiable DC analysis}
\label{alg:nonIDC}
\small
\begin{algorithmic}
  \REQUIRE $X_{i} \in \mathbb{R}^{n_i \times m}$, $Y_i \in \mathbb{R}^{n_i \times \ell}$, and $X_i^{\rm test}$ individually
  \ENSURE $Y_i^{\rm pred}$ $(i = 1, 2, \dots, c)$.
  \STATE
  \STATE
  \begin{tabular}{rcll}
    & \multicolumn{2}{c}{ {\it Worker-side} $(i = 1, 2, \dots, c)$} & \\ \cmidrule{2-3}
    1:  & \multicolumn{2}{l}{Generate $X^{\rm anc}$ and share to all workers} & \\
    2:  & \multicolumn{2}{l}{Generate $f_i'$ and $P_i$ that cannot be reconstructed} & \\
    3:  & \multicolumn{2}{l}{Compute $\widetilde{X}_i = P_if_i'(X_i), \widetilde{X}^{\rm anc}_i =  f_i'(X^{\rm anc})$, and $Y_i' = P_iY_i$} & \\
    4:  & \multicolumn{2}{l}{Erase $f_i'$ and $P_i$} & \\
    5:  & \multicolumn{2}{l}{Share $\widetilde{X}_i', \widetilde{X}_i^{\rm anc}$, and $Y_i'$ to master and erase them} & \\
    \\
    &   & \multicolumn{2}{c}{ {\it Master-side}}  \\ \cmidrule{3-4}
    6:  & \qquad \qquad \qquad \qquad $\searrow$ & \multicolumn{2}{l}{Obtain $\widetilde{X}_i', \widetilde{X}_i^{\rm anc}$, and $Y_i'$ for all $i$}  \\
%    6:  & & \multicolumn{2}{l}{Set $\widetilde{X}_i', \widetilde{X}_i^{\rm anc}$, and $Y$} \\
    7:  & & \multicolumn{2}{l}{Compute $G_i$ from $\widetilde{X}_{i}^{\rm anc}$ for all $i$ by \eqref{eq:g}} \\
    8:  & & \multicolumn{2}{l}{Compute $\widehat{X}_{i}' = \widetilde{X}_{i}' G_i$ for all $i$, and set $\widehat{X}'$} \\
    9:  & & \multicolumn{2}{l}{Analyze $\widehat{X}'$ to obtain $h$ such that $Y' \approx h(\widehat{X}')$} \\
    10:  & & \multicolumn{2}{l}{Compute $Y_i^{\rm anc} = h(\widetilde{X}_i^{\rm anc} G_i)$} \\
    11: & \qquad \qquad \qquad \qquad $\swarrow$ & \multicolumn{2}{l}{Return $Y_i^{\rm anc}$ to each worker} \\
    \\
    & \multicolumn{2}{c}{ {\it Worker-side} $(i = 1, 2, \dots, c)$} & \\ \cmidrule{2-3}
    12:  & \multicolumn{2}{l}{Obtain $Y_i^{\rm anc}$} \\
    13:  & \multicolumn{2}{l}{Analyze ${X}^{\rm anc}$ to obtain $t$ such that $Y_i^{\rm anc} \approx t_i(X^{\rm anc})$} \\
    14: & \multicolumn{2}{l}{Compute $Y_i^{\rm pred} = t_i(X_i^{\rm test})$} \\
  \end{tabular}
\end{algorithmic}
\end{algorithm*}
\subsection{Analysis on the identifiability of intermediate representations}
Based on the protocol of the proposed non-readily identifiable DC analysis (Algorithm~\ref{alg:nonIDC}), we have the following expression:
\begin{itemize}
  \item No common features to collate $X_i$ and $\widetilde{X}_i'$, based on the property of the function $f_i'$.
  \item No common sample ID to collate $X_i$ and $\widetilde{X}_i'$, based on using a random permutation $P_i$ that cannot be reconstructed.
  \item No function $f_i'$ because $f_i'$ cannot be constructed and is erased before sharing $\widetilde{X}_i'$.
    Function $f_i'$ can be re-generated only by solving the optimization problem $\widetilde{X}^{\rm anc}_i = f'_i(X^{\rm anc})$ for $f'_i$ by the cooperation of $i$-th worker and master.
  \item No function $(f_i')^{-1}$ such that ${\bm x}_k = (f_i')^{-1}(\widetilde{\bm x}_k')$ because $f_i'$ is a dimensionality-reduced function.
\end{itemize}
Therefore, from Definition~1, the intermediate representation $\widetilde{X}_i'$ is non-readily identifiable to original data $X_i$.
\section{Experiments}
In this section, the efficiency of the proposed non-readily identifiable DC analysis in Algorithm~\ref{alg:nonIDC} ({\bf DC-proposed}) is evaluated and compared with the conventional DC analysis in Algorithm~\ref{alg:naiveDC} ({\bf DC-naive}).
We compared with the centralized analysis that shares the raw datasets ({\bf Centralized}) and the local analysis that uses only local dataset $X_{i}$ ({\bf Local}).
{\bf Centralized} are considered as an ideal case because the raw data cannot be shared in our target situation.
\subsection{General settings}
As dimensionality reduction method in the worker-side of DC analyses, {\bf DC-naive} used the principal component analysis (PCA) \cite{pearson1901liii} and {\bf DC-proposed} used 
\begin{equation*}
  f'_i(X_i) = X_i F_i, \quad F_i = B_i E_i^{(1)},
\end{equation*}
where $B_i \in \mathbb{R}^{m \times \widetilde{m}_i}$ is a projection matrix of PCA for $X_i$ and $E_i^{(1)} \in \mathbb{R}^{\widetilde{m}_i \times \widetilde{m}_i}$ is a random matrix whose entries are normal random numbers that cannot be reconstructed.
We set $\widehat{m} = \widetilde{m}_i$.
We used the ridge regression for analyzing the collaboration representation (Step 8 in Algorithm~\ref{alg:naiveDC} and Step 9 in Algorithm~\ref{alg:nonIDC}) and for prediction model (Step 13 in Algorithm~\ref{alg:nonIDC}).
\par
The anchor data $X^{\rm anc}$ is constructed based on a low-rank approximation-based approach introduced in \cite{imakura2021interpretable} as follows:
In each worker, the local anchor data $X_{i}^{\rm approx}$ is constructed by using a low-rank approximation of $X_{i}$ with random perturbation, that is,
\begin{equation*}
  X_{i}^{\rm approx} = X_{i}^{\rm TSVD} + \delta E_i^{(2)},
\end{equation*}
where $X_{i}^{\rm TSVD} \in \mathbb{R}^{n_i \times m}$ is a low-rank approximation based on the truncated SVD of $X_{i}$, $\delta=0.05$ is a perturbation parameter, and $E^{(2)}_i \in \mathbb{R}^{n_i \times m}$ is a random matrix whose entries are uniform random numbers in $[-1,1]$.
Sharing $X_{i}^{\rm approx}$ with all users, $n$ samples of anchor data are generated as $X^{\rm approx} = [(X^{\rm approx}_1)^{\rm T}, (X^{\rm approx}_2)^{\rm T}, \dots, (X^{\rm approx}_c)^{\rm T}]^{\rm T} \in \mathbb{R}^{n \times m}$.
Next, to generate $r$ samples of anchor data $X^{\rm anc}$, we applied an augmentation technique by linear combination if $r > n$, and we select $r$ samples randomly otherwise (that is, $r \leq n$).
We set $r= 2,000$ as the number of anchor data.
These settings are based on our preliminary experiments and not always the best choice in terms of performance.
\par
We used the ridge regression for {\bf Centralized} and {\bf Local}.
We set the ground truth $Y$ as a binary matrix whose $(i,j)$-th entry is 1 if the training data ${\bm x}_i$ are in class $j$ and $0$ otherwise.
This ground truth has been applied to various classification algorithms, including ridge regression and deep neural networks \cite{bishop2006pattern}.
\par
All numerical experiments were performed using MATLAB\footnote{Program codes are available from the corresponding author by reasonable request.}.
\subsection{Experiment I: proof-of-concept}
As a proof of concept of the proposed method, we used a simulated hospital dataset ``{\sf hospital.mat}'' from the MATLAB Statistics and Machine Learning Toolbox.
The dataset contains 100 samples with six features: sex (male or female), age, weight, smoker (true or false), systolic blood pressure (SBP), and diastolic blood pressure (DBP).
We generate a data matrix $X \in \mathbb{R}^{100 \times 4}$ with sex, age, weight, and smoker and set a label ${\bm y} = [y_1, y_2, \dots, y_{100}]^{\rm T} \in \mathbb{R}^{100}$ for high blood pressure as follows:
\begin{equation*}
  y_i = \left\{
    \begin{array}{ll}
      1 & \mbox{SBP} \geq 140 \mbox{ or } \mbox{DBP} \geq 80, \\
      0 & \mbox{otherwise}.
    \end{array}
  \right.
\end{equation*}
\subsubsection{Identifiability}
We discuss the identifiability of the proposed method.
We set two parties with randomly selected 10 samples as presented in Table~\ref{table:ex1_raw}.
According to the algorithm of the proposed method (Algorithm~\ref{alg:nonIDC}) with $\widetilde{m}_i = \widehat{m} = 3$, each party constructs and shares the intermediate representation $\widetilde{X}_i'$ to the master.
Table~\ref{table:ex1_ir} reveals the shared intermediate representations $\widetilde{X}'_1$ and $\widetilde{X}'_2$ with sample IDs.
We note that the sample IDs are not stored in actual operation.
Next, the intermediate representations are transformed to the collaboration representations $\widehat{X}'_i$ as presented in Table~\ref{table:ex1_dc}.
\begin{table}[t]
\footnotesize
\centering
\caption{Raw data $X_1$ and $X_2$ with sample IDs.}
\label{table:ex1_raw}
\begin{tabular}{ccccccccccc}
\toprule
\multicolumn{5}{c}{In Party 1 $(X_1)$ } & & \multicolumn{5}{c}{In Party 2 ($X_2$)} \\ \cmidrule{1-5} \cmidrule{7-11}
IDs& Sex & Age &Weight& Smoke & & IDs & Sex & Age &Weight& Smoke \\ \cmidrule{1-5} \cmidrule{7-11}
$ 1$& $1$ & $32$ & $183$ & $0$ & & $ 1$& $0$ & $44$ & $146$ & $1$  \\
$ 2$& $1$ & $39$ & $188$ & $0$ & & $ 2$& $0$ & $25$ & $114$ & $0$  \\
$ 3$& $1$ & $43$ & $163$ & $0$ & & $ 3$& $0$ & $37$ & $129$ & $0$  \\
$ 4$& $0$ & $38$ & $124$ & $0$ & & $ 4$& $0$ & $28$ & $123$ & $1$  \\
$ 5$& $1$ & $45$ & $181$ & $0$ & & $ 5$& $1$ & $25$ & $174$ & $0$  \\
$ 6$& $0$ & $32$ & $136$ & $0$ & & $ 6$& $1$ & $33$ & $180$ & $1$  \\
$ 7$& $0$ & $39$ & $117$ & $0$ & & $ 7$& $0$ & $28$ & $111$ & $0$  \\
$ 8$& $1$ & $45$ & $170$ & $1$ & & $ 8$& $1$ & $45$ & $172$ & $1$  \\
$ 9$& $1$ & $50$ & $172$ & $0$ & & $ 9$& $0$ & $36$ & $129$ & $0$  \\
$10$& $0$ & $44$ & $136$ & $1$ & & $10$& $0$ & $40$ & $137$ & $0$  \\
\bottomrule
\end{tabular}
\end{table}
\begin{table}[t]
\footnotesize
\centering
\caption{
Intermediate representations $\widetilde{X}'_1$ and $\widetilde{X}'_2$ with sample IDs. Sample IDs are not stored in actual operation. \\
{\bf Remark:} {\it The columns of the intermediate representations cannot be a key to collate raw data.}
}
\label{table:ex1_ir}
\begin{tabular}{crrrccrrr}
\toprule
\multicolumn{4}{c}{From Party 1} & & \multicolumn{4}{c}{From Party 2} \\ \cmidrule{1-4} \cmidrule{6-9}
IDs & $\widetilde{X}'_1(:,1)$ & $\widetilde{X}'_1(:,2)$ & $\widetilde{X}'_1(:,3)$ & & IDs &  $\widetilde{X}'_2(:,1)$ & $\widetilde{X}'_2(:,2)$ & $\widetilde{X}'_2(:,3)$ \\ \cmidrule{1-4} \cmidrule{6-9}
$ 5$ & $-220.48$ & $281.54$ & $-132.47$ & & $ 6$ & $163.34$ & $-145.69$ & $-139.68$  \\
$ 7$ & $-145.23$ & $192.60$ & $-76.83 $ & & $ 8$ & $164.24$ & $-140.06$ & $-133.82$  \\
$ 1$ & $-219.27$ & $270.20$ & $-145.82$ & & $ 9$ & $124.30$ & $-104.44$ & $-101.18$  \\
$ 2$ & $-226.92$ & $284.14$ & $-144.41$ & & $ 4$ & $115.00$ & $-100.18$ & $-95.27 $  \\
$ 4$ & $-153.01$ & $200.55$ & $-84.36 $ & & $ 2$ & $105.73$ & $-91.89 $ & $-89.22 $  \\
$10$ & $-168.93$ & $222.65$ & $-89.66 $ & & $ 1$ & $142.96$ & $-119.37$ & $-113.59$  \\
$ 3$ & $-199.22$ & $256.20$ & $-117.13$ & & $ 5$ & $153.42$ & $-139.48$ & $-135.81$  \\
$ 6$ & $-165.21$ & $209.60$ & $-101.05$ & & $10$ & $133.08$ & $-111.02$ & $-107.51$  \\
$ 9$ & $-211.47$ & $275.30$ & $-119.52$ & & $ 7$ & $105.16$ & $-89.69 $ & $-86.98 $  \\
$ 8$ & $-208.30$ & $267.56$ & $-121.20$ & & $ 3$ & $124.91$ & $-104.50$ & $-101.21$  \\
\bottomrule
\end{tabular}
\end{table}
\begin{table}[t]
\footnotesize
\centering
\caption{
Collaboration representations $\widehat{X}'_1$ and $\widehat{X}'_2$ with sample ID. Sample IDs are not stored in actual operation. \\
{\bf Remark:} {\it The columns of the collaboration representations cannot be a key to collate raw data.}
}
\label{table:ex1_dc}
\begin{tabular}{crrrccrrr}
\toprule
\multicolumn{4}{c}{From Party 1} & & \multicolumn{4}{c}{From Party 2} \\ \cmidrule{1-4} \cmidrule{6-9}
IDs & $\widehat{X}'_1(:,1)$ & $\widehat{X}'_1(:,2)$ & $\widehat{X}'_1(:,3)$ & & IDs &  $\widehat{X}'_2(:,1)$ & $\widehat{X}'_2(:,2)$ & $\widehat{X}'_2(:,3)$ \\ \cmidrule{1-4} \cmidrule{6-9}
$ 5$ & $-464.96$ & $0.26  $ & $0.97 $ &  & $ 6$ & $-453.51$ & $-14.49$ & $-0.72$  \\
$ 7$ & $-308.22$ & $13.40 $ & $0.56 $ &  & $ 8$ & $-443.80$ & $4.08  $ & $-0.64$  \\
$ 1$ & $-459.64$ & $-17.67$ & $0.88 $ &  & $ 9$ & $-334.40$ & $5.44  $ & $0.66 $  \\
$ 2$ & $-476.94$ & $-10.01$ & $0.94 $ &  & $ 4$ & $-314.17$ & $-2.38 $ & $-0.92$  \\
$ 4$ & $-324.08$ & $9.77  $ & $0.57 $ &  & $ 2$ & $-290.24$ & $-4.24 $ & $0.53 $  \\
$10$ & $-357.47$ & $14.64 $ & $-0.85$ &  & $ 1$ & $-381.22$ & $11.32 $ & $-0.73$  \\
$ 3$ & $-420.64$ & $3.51  $ & $0.91 $ &  & $ 5$ & $-432.83$ & $-23.98$ & $0.71 $  \\
$ 6$ & $-347.94$ & $-2.14 $ & $0.55 $ &  & $10$ & $-356.51$ & $8.13  $ & $0.72 $  \\
$ 9$ & $-447.45$ & $9.86  $ & $0.98 $ &  & $ 7$ & $-285.44$ & $0.73  $ & $0.55 $  \\
$ 8$ & $-439.05$ & $4.71  $ & $-0.55$ &  & $ 3$ & $-335.18$ & $6.77  $ & $0.67 $  \\
\bottomrule
\end{tabular}
\end{table}
\par
From the comparison of Tables~\ref{table:ex1_raw} and \ref{table:ex1_ir}, the columns of the intermediate representations are uncorrelated or weakly correlated with the features of the raw data (the Pearson correlation coefficients are less than 0.4), which indicates the columns of the intermediate representations cannot be a key to collate $X_i$.
Similarly, from the comparison of Tables~\ref{table:ex1_raw} and \ref{table:ex1_dc}, the columns of the collaboration representations cannot be a key to collate $X_i$.
\par
On the other hand, according to the algorithm, function $f'_i$ has been erased.
Even if the same program code is used to reconstruct $f'_i$, it will not be the same function because random matrix $E_i^{(1)}$ differs completely.
%In Table~\ref{table:ex1_re}, the columns of $f'_1(X_1)$ obtained by the original and reconstructed $f'_1$ using a reconstructed $E_1$ with different random numbers for the same data $X_1$ are presented.
In Table~\ref{table:ex1_re}, the columns of the original and reconstructed $f'_1(X_1)$ for the same data $X_1$ are presented.
The reconstructed $f'_1(X_1)$ differs completely from the original and cannot be a key to collate $X_i$.
\par
Thus, from Definition~1, the intermediate and collaboration representations, $\widetilde{X}_i'$ and $\widehat{X}_i'$, are non-readily identifiable to original data $X_i$.
\begin{table}[t]
\footnotesize
\centering
\caption{
The 1st, 2nd, and 3rd columns of original and reconstructed $f'_1(X_1)$ for the same $X_1$ with sample IDs. \\
{\bf Remark:} {\it Reconstructed $f'_1(X_1)$ differs completely from original $f'_1(X_1)$.}
}
\label{table:ex1_re}
\begin{tabular}{crrrccrrr}
\toprule
 \multicolumn{4}{c}{Original $f'_1(X_1)$} & & \multicolumn{4}{c}{Reconstructed $f'_1(X_1)$} \\ \cmidrule{1-4} \cmidrule{6-9}
IDs  & 1st & 2nd & 3rd & & IDs & 1st & 2nd & 3rd  \\ \cmidrule{1-4} \cmidrule{6-9}
$ 1$ & $-219.27$ & $270.20$ & $-145.82$ & & $ 1$& $-3.12$ & $105.89$ & $38.18$  \\
$ 2$ & $-226.92$ & $284.14$ & $-144.41$ & & $ 2$& $1.35 $ & $107.70$ & $53.41$  \\
$ 3$ & $-199.22$ & $256.20$ & $-117.13$ & & $ 3$& $8.00 $ & $91.75 $ & $67.59$  \\
$ 4$ & $-153.01$ & $200.55$ & $-84.36 $ & & $ 4$& $10.02$ & $68.93 $ & $63.63$  \\
$ 5$ & $-220.48$ & $281.54$ & $-132.47$ & & $ 5$& $6.84 $ & $102.37$ & $68.68$  \\
$ 6$ & $-165.21$ & $209.60$ & $-101.05$ & & $ 6$& $3.79 $ & $77.31 $ & $47.37$  \\
$ 7$ & $-145.23$ & $192.60$ & $-76.83 $ & & $ 7$& $11.80$ & $64.49 $ & $67.32$  \\
$ 8$ & $-208.30$ & $267.56$ & $-121.20$ & & $ 8$& $8.54 $ & $96.37 $ & $70.78$  \\
$ 9$ & $-211.47$ & $275.30$ & $-119.52$ & & $ 9$& $11.89$ & $96.00 $ & $82.03$  \\
$10$ & $-168.93$ & $222.65$ & $-89.66 $ & & $10$& $12.80$ & $75.90 $ & $75.10$  \\
\bottomrule
\end{tabular}
\end{table}
\subsubsection{Recognition performance}
The recognition performance of the proposed method was evaluated.
We set 1--8 parties with randomly selected 10 samples for the training dataset.
We also set test dataset $X^{\rm test}$ with 20 samples.
\par
Recognition performance was evaluated by the area under the curve (AUC) of the receiver operating characteristic, and its average and standard error of 20 trials for each method are detailed in Figure~\ref{fig:hospital}.
The results reveal that the recognition performance of {\bf Centralized}, {\bf DC-naive} and {\bf DC-proposed} increase with the increase in the number of parties.
%For eight parties, average AUC of {\bf Centralized}, {\bf DC-naive}, {\bf DC-proposed}, and {\bf Local} are 0.759, 0.733, 0.727, and 0.633, respectively.
%{\bf Centralized}, {\bf DC-naive}, and {\bf DC-proposed} achieves 12 percentage point, 10 percentage point, and 9 percentage point performance improvement over {\bf Local}, respectively.
{\bf DC-proposed} reveals almost the same high values as that of {\bf DC-naive} with the exception of one party and is comparable to that of {\bf Centralized}.
For eight parties, {\bf Centralized}, {\bf DC-naive}, and {\bf DC-proposed} achieves 12 percentage point, 10 percentage point, and 9 percentage point performance improvements in the average AUC over {\bf Local}, respectively.
\begin{figure}[!t]
\centering
\includegraphics[scale=0.5, bb = 50 250 545 602]{./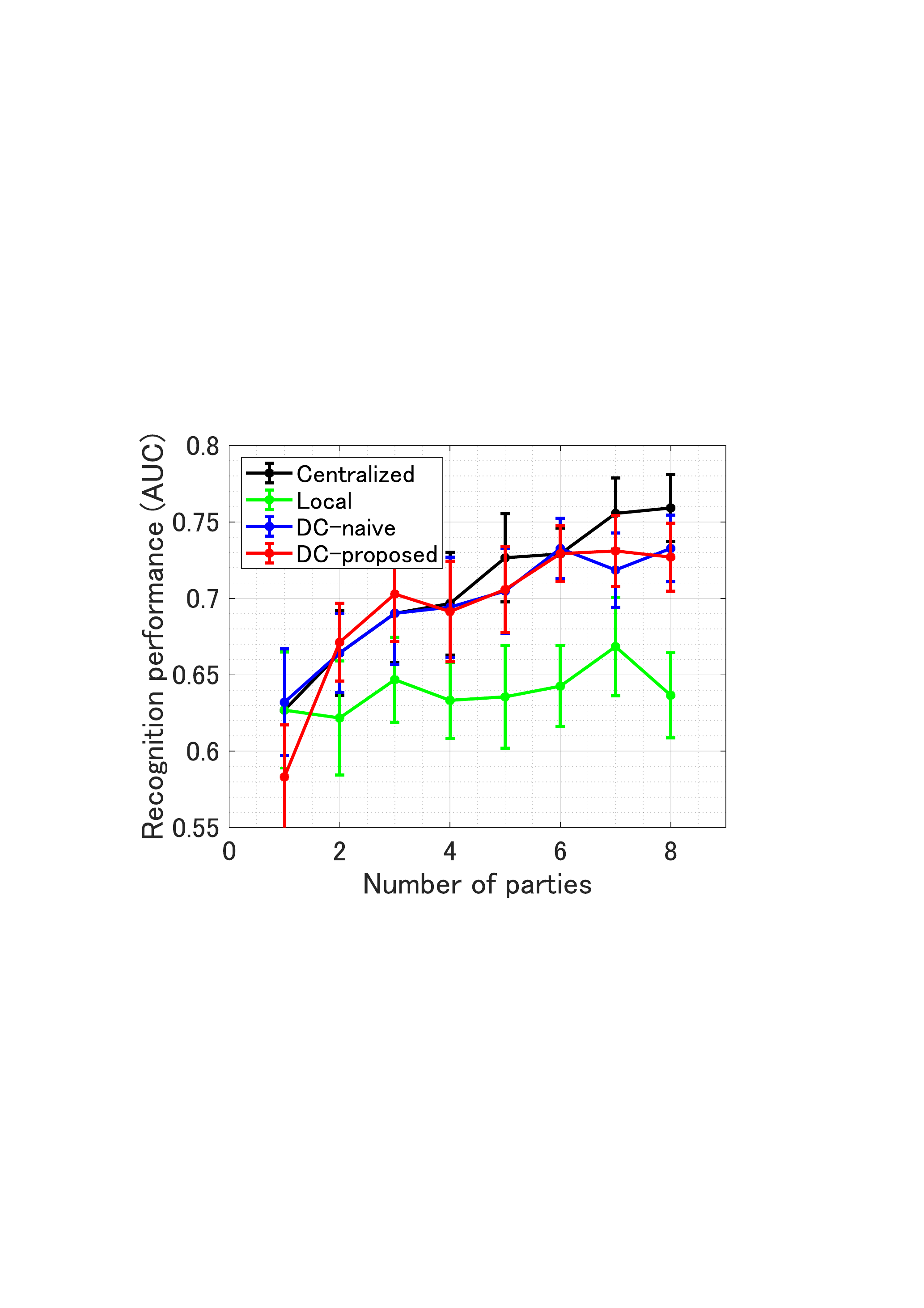}
\caption{Recognition performance (AUC) for the hospital dataset. \\
  {\bf Remark:} {\it {\bf DC-proposed} reveals almost the same high values as that of {\bf DC-naive} and is comparable to that of {\bf Centralized}.}
}
\label{fig:hospital}
\end{figure}
\subsection{Experiment II: performance evaluation for real-world datasets}
Here, the recognition performance for five datasets was evaluated as presented in Table~\ref{table:ex2_data}, obtained from the survival package of the R language.
For each dataset, a label ${\bm y} = [y_1, y_2, \dots$, $y_n]^{\rm T} \in \mathbb{R}^n$ was set using a survival time $t_i$ as presented in Table~\ref{table:ex2_data2}.
We set four parties with randomly selected 10 samples for the training dataset and set test dataset $X^{\rm test}$ with 20 samples.
\par
Average recognition performance (AUC) and standard error in 20 trials of methods are presented in Table~\ref{table:ex2_results}. 
The results demonstrated that the recognition performance of {\bf DC-proposed} reveals almost the same high values as that of {\bf DC-naive} and is comparable to that of {\bf Centralized}.
In some cases, {\bf DC-naive} and {\bf DC-proposed} outperform {\bf Centralized}, which may be derived from a performance improvement because of a dimensionality reduction method and a model reconstruction with a large number of anchor dataset $X^{\rm anc}$.
This will be investigated in the future.
\begin{table*}[!t]
  \caption{Datasets obtained from the survival package of the R language for Experiment II.}
  \label{table:ex2_data}
\footnotesize
\centering
\begin{tabular}{lrrl} \toprule
  \multicolumn{1}{c}{Name of dataset} & \multicolumn{1}{c}{$n$} & \multicolumn{1}{c}{$m$} & \multicolumn{1}{c}{Description} \\ \midrule
  {\sf colon} & 888 & 13 & Chemotherapy for Stage B/C colon cancer data \\
  {\sf kidney} & 76 & 6 & Kidney Catheter Data \\
  {\sf lung} & 167 & 7 & NCCTG Lung Catheter Data \\
  {\sf pbc} & 276 & 17 & Mayo Clinic Primary Biliary Cholangitis Data \\
  {\sf veteran} & 137 & 4 & Veterans' Administration Lung Cancer Study Data\\
\bottomrule
\end{tabular}
\end{table*}
\begin{table*}[!t]
  \caption{Label setting for Experiment II.}
  \label{table:ex2_data2}
\footnotesize
\centering
\begin{tabular}{lcc} \toprule
  \multicolumn{1}{c}{Name of dataset} & \multicolumn{1}{c}{condition for $y_i = 1$} & \multicolumn{1}{c}{rate of $y_i = 1$} \\ \midrule
  {\sf colon}   & $t_i \geq 1500$ & 0.50 \\
  {\sf kidney}  & $t_i \geq 100$  & 0.36 \\
  {\sf lung}    & $t_i \geq 400$  & 0.26 \\
  {\sf pbc}     & $t_i \geq 2000$ & 0.44 \\
  {\sf veteran} & $t_i \geq 100$  & 0.39 \\
\bottomrule
\end{tabular}
\end{table*}
\begin{table}[t]
\footnotesize
\centering
\caption{Recognition performance (AUC) for real-world datasets. \\
  {\bf Remark:} {\it {\bf DC-proposed} reveals almost the same high values as that of {\bf DC-naive} and is comparable to that of {\bf Centralized}.}
}
\label{table:ex2_results}
\begin{tabular}{lcccc}
\toprule
\multicolumn{1}{c}{Name of dataset} & \multicolumn{1}{c}{\bf Local} & \multicolumn{1}{c}{\bf Centralized} & \multicolumn{1}{c}{\bf DC-naive} & \multicolumn{1}{c}{\bf DC-Proposed} \\
\cmidrule(lr){1-1} \cmidrule(lr){2-2} \cmidrule(lr){3-3} \cmidrule(lr){4-4} \cmidrule(lr){5-5}
{\sf colon}   & $0.51 \pm 0.03$	& $0.63 \pm 0.02$	& $0.64 \pm 0.03$	& $0.63 \pm 0.03$ \\
{\sf kidney}  & $0.66 \pm 0.04$	& $0.72 \pm 0.02$	& $0.74 \pm 0.02$	& $0.74 \pm 0.02$ \\
{\sf lung}    & $0.50 \pm 0.04$	& $0.56 \pm 0.05$	& $0.53 \pm 0.04$	& $0.53 \pm 0.04$ \\
{\sf pbc}     & $0.61 \pm 0.03$	& $0.66 \pm 0.03$	& $0.71 \pm 0.02$	& $0.71 \pm 0.03$ \\
{\sf veteran} & $0.65 \pm 0.03$	& $0.69 \pm 0.02$	& $0.72 \pm 0.02$	& $0.72 \pm 0.02$ \\
\bottomrule
\end{tabular}
\end{table}
\section{Discussion}
Multi-source data fusion, in which multiple data sources are jointly analyzed to obtain improved information, has considerable research attention.
In case of multi-source data fusion on multiple medial institutions, data confidentiality and cross-institutional communication are major concerns for social implementation.
For analysis of data including personal information, identifiability of the shared data is also essential.
\par
Although DC analysis may be a suitable choice regarding data confidentiality and cross-institutional communication concerns, this study revealed that it is difficult to solve the identifiability concern based on the conventional DC framework.
This study then proposed the non-identifiable DC analysis that solves the identifiability concern while maintaining the strength of DC analysis.
The proposed method could become a breakthrough technology for future multi-source data fusion on multiple medical institutions.
\par
However, this study is based on the mathematical definition of identifiability (Definition~1 in Section~3).
The legal justification should be examined separately.
\section{Conclusions}
This study addressed challenges, such as data confidentiality, cross-institutional communication, and identifiability, of the multi-source data fusion of multiple medical datasets including personal information.
A recent DC framework was used to address the data confidentiality and cross-institutional communication concerns.
This study investigated the identifiability of DC analysis.
\par
A mathematical definition of the identifiability of the data (Definition~1 in Section~3) was introduced.
The results revealed that the shared intermediate representations of the conventional DC analysis are readily identifiable to the original data for supervised learning (Section 4.2).
Next, we proposed a non-readily identifiable DC analysis based on a random sample permutation, the concept of interpretable DC analysis, and usage of functions that cannot be reconstructed.
In this method, the shared intermediate representations are non-readily identifiable to the original data (Section 5).
\par
The proposed non-readily identifiable DC analysis exhibited a non-readily identifiability while maintaining a high recognition performance of the conventional DC analysis for medical datasets (Section~6).
For a hospital dataset, the proposed method exhibited a nine percentage point improvement regarding the recognition performance (AUC) over the local analysis.
The numerical experiments also demonstrated that the recognition performance of the proposed method was comparable to the centralized analysis that shares the raw datasets.
\par
In the future, we will further develop the algorithm and software for social implementation.
\section*{Acknowledgments}
% The authors would like to thank the anonymous reviewers for their constructive comments.
This work was supported in part by the New Energy and Industrial Technology Development Organization (NEDO), Japan Science and Technology Agency (JST) (No. JPMJPF2017), and the Japan Society for the Promotion of Science (JSPS), Grants-in-Aid for Scientific Research (Nos. JP19KK0255, JP21H03451, JP22H00895, JP22K19767).

\bibliography{mybibfile}
\bibliographystyle{elsart-num-sort}

\end{document}